\theoremstyle{definition}
\newtheorem{dfn}{Definition}
\newtheorem{thm}{Theorem}
\newtheorem{ex}{Example}
\begin{document}

\title{Metric Learning for Ordered Labeled Trees with \textsl{\textbf{pq}}-grams}

\author{Hikaru Shindo\institute{Kyoto University, Japan, email: hikarushindo@iip.ist.i.kyoto-u.ac.jp}  
\and Masaaki Nishino\institute{NTT Communication Science Laboratories, Japan}
\and Yasuaki Kobayashi\institute{Kyoto University, Japan}
\and Akihiro Yamamoto\institute{Kyoto University, Japan}}


\maketitle
\bibliographystyle{ecai}

\begin{abstract}
Computing the similarity between two data points plays a vital role in many machine learning algorithms. 
Metric learning has the aim of learning a good metric automatically from data.
Most existing studies on metric learning for tree-structured data have adopted the approach of learning the tree edit distance.
However, the edit distance is not amenable for big data analysis because it incurs high computation cost. 
In this paper, we propose a new metric learning approach for tree-structured data with $pq$-grams.
The $pq$-gram distance is a distance for ordered labeled trees,  and has much lower computation cost than the tree edit distance.
In order to perform metric learning based on $pq$-grams, we propose a new differentiable parameterized distance, {\em weighted $pq$-gram distance}.
We also propose a way to learn the proposed distance based on Large Margin Nearest Neighbors (LMNN), which is a well-studied and practical metric learning scheme.
We formulate the metric learning problem as an optimization problem and use the gradient descent technique to perform metric learning.
We empirically show that the proposed approach not only achieves competitive results with the state-of-the-art edit distance-based methods in various classification problems, but also solves the classification problems much more rapidly than the edit distance-based methods.
\end{abstract}

\section{INTRODUCTION}

The performance of many machine learning algorithms depends on the way in which the distance or similarity between data points are measured~\cite{Kulis13}.
For instance, 
$k$-nearest neighbor classification~\cite{Cover67} decides the class label of a data point from those of its neighbors,
while Learning Vector Quantization (LVQ)~\cite{Kohonen1995} classifies each data point based on the closest prototype according to a given distance measure.
Clustering algorithms, such as K-Means \cite{Lloyd82}, rely on a given distance or similarity function for input data.
In order for these algorithms to perform accurately and efficiently, a metric that suits the given problem is necessary.
The metric should capture the characteristics of the datasets expected, i.e., a pair of data points from the same class is closer than a pair of points from different classes.
 
The objective of metric learning is to learn a good metric from data since handcrafting good metrics for specific problems is generally difficult~\cite{DBLP:journals/corr/BelletHS13}.
Metric learning has been an active research topic because of its applicability, i.e., any algorithm using a distance measure internally 
can benefit from its results~\cite{DBLP:journals/corr/BelletHS13, Kulis13}.
In metric learning, a training set consists of sets of pairs; {\em positive pairs} and {\em negative pairs}.
Metrics are learned by optimizing a loss function that makes positive pairs closer while separating negative pairs.
This enables us to improve the accuracy of various machine learning algorithms that depend on the metric.
Figure \ref{fig:abst} illustrates metric learning applied to tree-structured data, which is the focus of this paper.

Discrete structures, in particular tree structures, play a key role in several research domains, for instance, XML documents on the web, parse trees for computer programs and natural language, and glycan structures in bioinformatics.
Distance measures that exploit such structures have been extensively studied.
The tree edit distance~\cite{Tai:1979:TCP:322139.322143} is one of the common choices for processing tree-structured data.
Intuitively, the tree edit distance is measured by the number of operations needed to transform one input tree into another input tree.
The edit operations consist of deletion, insertion, and replacement of the nodes in the trees.
The tree edit distance is used in many research domains such as information extraction and bioinformatics~\cite{Jiang02, Reis04}. 
Computing the tree edit distance, however, is not scalable making it problematic for large-scale datasets.
The current best algorithm runs in $\mathcal{O}(n^3)$ time where $n$ is the number of nodes of the input trees~\cite{Demaine:2009:ODA:1644015.1644017}.
To overcome this issue, Augsten~\cite{Augsten:2008:PGD:1670243.1670247} proposed the $pq$-gram distance; it can be computed faster than the tree edit distance.
Computing the $pq$-gram distance of trees with $n$ nodes can be done in time $\mathcal{O}(n \log n)$ for fixed $p,q \in \mathbb{N}$.
Moreover, the $pq$-gram distance is known to approximate the fanout weighted tree edit distance, which is a weighted variant of the tree edit distance~\cite{Augsten:2008:PGD:1670243.1670247}.

Most existing studies of metric learning for tree-structured data use the tree edit distance in learning, i.e.,  
learned costs of the edit operations from examples~\cite{Bellet2012, Mokbel2015306, pmlr-v80-paassen18a}.
The edit distance is, however, expensive to compute and hence not suitable for large-scale datasets.

\begin{figure}[t]
    \centering
    \includegraphics[width=\linewidth]{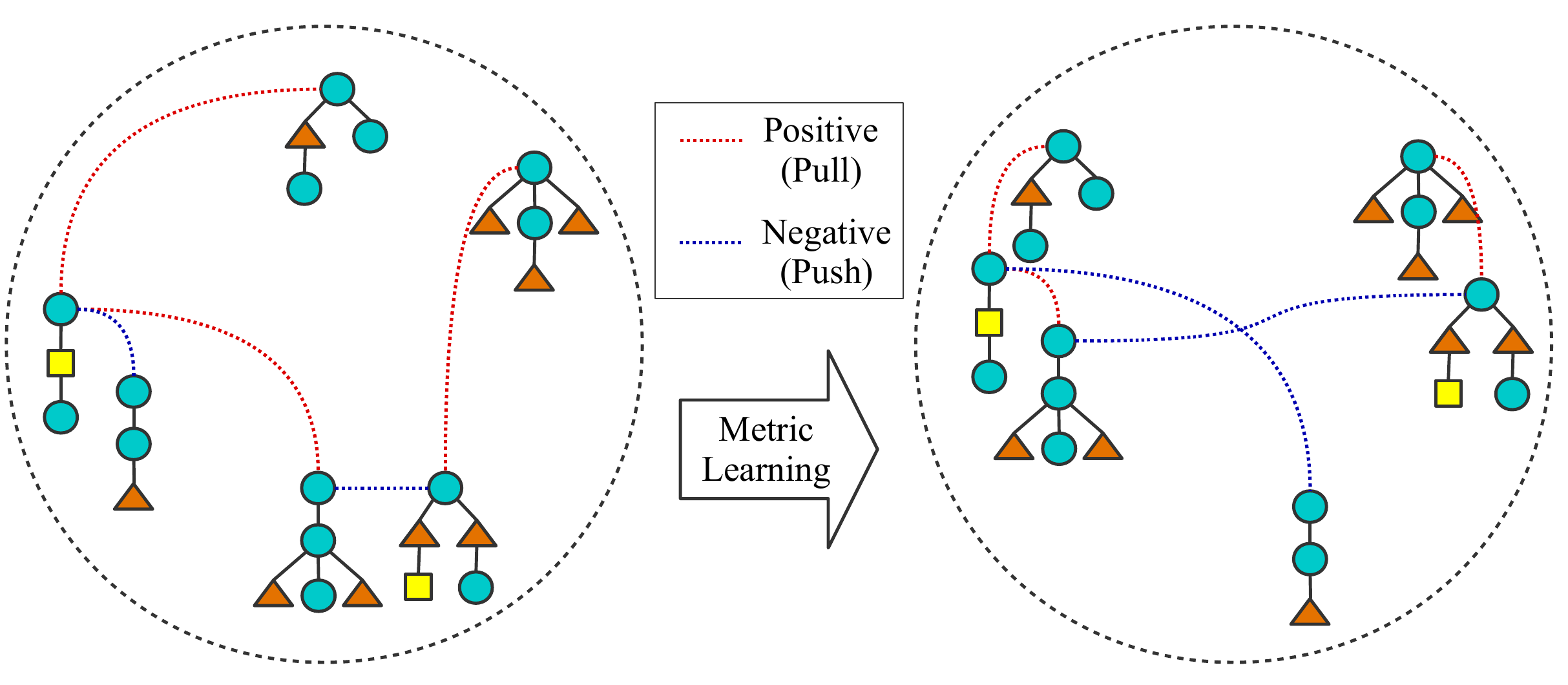}
    \caption{An illustration of metric learning applied to tree-structured data. The training examples are given as pairwise constraints. The red dotted lines represent positive pairs. The blue dotted lines represent negative pairs. Metrics are optimized to pull positive pairs closer and push negative pairs father apart.}
    \label{fig:abst}
    \vspace{-1.0em}
\end{figure}

In this paper, we propose a novel metric learning approach for tree-structured data that has the following features.
First, we propose a differentiable parameterized distance based on $pq$-grams, the {\em weighted $pq$-gram distance}, to achieve practical metric learning even for large-scale tree-structured data sets.
To make the distance function differentiable and always positive, we use the {\em softplus} function.
It enables us to learn the distance function by gradient descent techniques and retain the triangle inequality during the learning process.
Second, we also propose a way to learn the weighted $pq$-gram distance through Large Margin Nearest Neighbors (LMNN)~\cite{weinberger2009distance}, which is one of the most widely-used metric learning schemes.
Our proposed approach not only achieves results competitive with those of state-of-the-art edit distance-based methods~\cite{Bellet2012, pmlr-v80-paassen18a} in various classification problems, but also solves classification problems much faster than edit distance-based methods.
Third, our method is interpretable. 
Our approach shares some aspects with kernel methods~\cite{DBLP:Gartner03}, however,  different from kernel methods,
we do not implicitly cast data points into a high dimension space.
Moreover, our weight parameter indicates which tree substructures are important for classifying input trees.

The remainder of this paper is structured as follows.
We discuss related work on metric learning for structured data in Section 2.
In Section 3, we revisit the basic concepts of tree-structured data, the $pq$-gram distance, and the scheme of distance metric learning as background.
Section 4 describes our metric learning system in detail. 
Section 5 describes the experiments conducted on our methods, including accuracy and time comparisons.
We conclude in Section 6.

\section{RELATED WORK}
Our work is related to some machine learning research areas, especially metric learning and machine learning for structured data.

A pioneering study of metric learning learned Mahalanobis distance as an optimization problem~\cite{NIPS2002}.
Large Margin Nearest Neighbors (LMNN)~\cite{weinberger2009distance} was proposed in order to learn the Mahalanobis distance from nearest neighbors.
LMNN is often used because of its simplicity and efficiency ~\cite{kedem12, Kulis13, shibin10, mmlmnn}.
LMNN is also a well-studied metric-learning scheme. For instance, the relation between LMNN and Support Vector Machine has been pointed out in a unified view~\cite{huyen12}.
We apply the LMNN scheme to learn the distance between labeled ordered trees.

Almost all past studies on learning distances between trees employ edit distance, i.e., learning the costs of edit operations from examples.
Early work in learning edit distance is the stochastic approach~\cite{mcc05, sebban06, ristad98}.
Good Edit Similarity Learning (GESL)~\cite{Bellet2012} is a well-organized framework to learn the edit distance. GESL essentially optimizes $(\epsilon, \gamma, \tau)$-goodness~\cite{BalcanCOLT08, BalcanML08}, which guarantees its generalization performance.
Mokbel~\cite{Mokbel2015306} proposed a novel approach to learn simultaneously the edit costs for sequences and the Generalized Learning Vector Quantization (GLVQ)~\cite{Sato:1995:GLV:2998828.2998888} model.
Paa{\ss}en \cite{pmlr-v80-paassen18a} proposed to learn embeddings of the tree node labels while learning the GLVQ model. This approach is called Embedding Edit Distance Learning (BEDL) and succeeds in learning the edit distance flexibly from examples.
These works blazed a trail in the field of metric learning for structured data.
However, all of these methods incur high computation cost since they essentially compute the edit distance between trees.
Our method uses the $pq$-gram distance rather than the edit distance in learning the parameters.
This is a key difference between past studies and our approach.

In the field of machine learning for structured data, the kernel method is an active research topic~\cite{DBLP:Gartner03}.
For instance, Kuboyama~\cite{kuboyama2007} proposed a kernel function that is computed from $q$-grams of trees.
Tree kernels have been applied in many domains, such as Natural Language Processing and Bioinformatics~\cite{mos2006, yamanishi07}.
Kernel methods, however, lack interpretability since they implicitly cast data points into a high dimension space.
Moreover, the kernel matrix must be positive semidefinite, but this constraint does not suit some problems~\cite{Schleif2015}. 

\section{BACKGROUND}
\begin{figure*}[t]
    \centering
    \includegraphics[width=\linewidth]{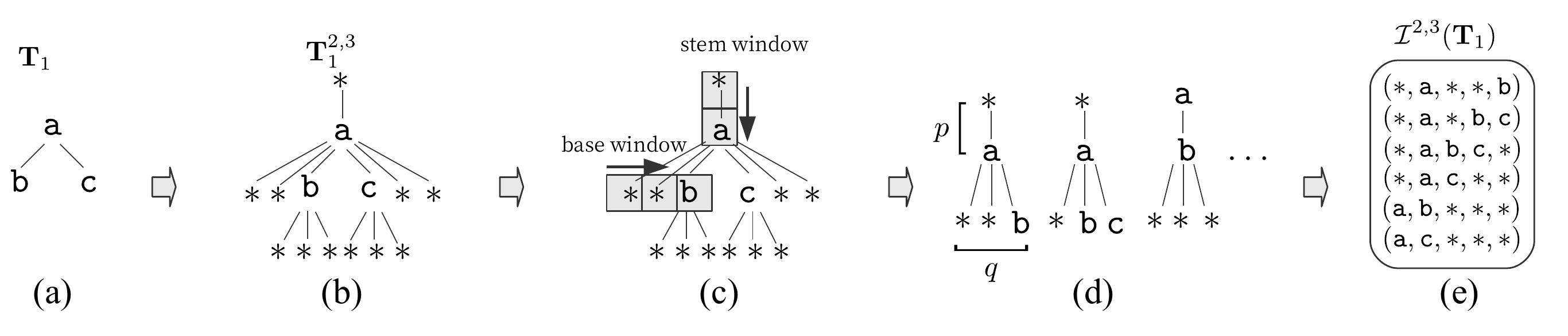}
    \vspace{-1.0em}
    \caption{Overview of extracting $pq$-grams with $p=2$ and $q=3$ from tree ${\bf T}_1$. (a) A rooted labeled ordered tree ${\bf T}_1$. ~(b) The $pq$-extended tree of ${\bf T}_1$, is obtained by inserting dummy nodes as in Def. \ref{def:extend}. The dummy nodes are represented by ${\tt *}$. ~(c) Sliding two windows $pq$-extended tree. The size $p$ window (stem window) moves vertically and the size $q$ window (base window) moves horizontally. ~(d) The multiset of extracted $pq$-grams of ${\bf T}_1$. ~(e) The $pq$-gram index of ${\bf T}_1$.
    Since the shape of pq-grams depends only on the values of $p$ and $q$, we can represent the multiset of pq-grams as a multiset of tuples of length $p + q$. }
    \label{fig:pq_flow}
\end{figure*}
In this section, we define the basic concepts of tree-structured data and the $pq$-gram distance following~\cite{Augsten:2008:PGD:1670243.1670247}.
We also review the general concept of metric learning following~\cite{DBLP:journals/corr/BelletHS13, Kulis13} and a metric learning algorithm following \cite{DBLP:journals/corr/BelletHS13, weinberger2009distance}.
\subsection{Preliminaries}
{\em Tree} ${\bf T}$ is a directed, acyclic, connected, non-empty graph with node set $N({\bf T})$ and edge set $E({\bf T})$. 
An {\em edge} is an ordered pair $(p, c)$, where $p, c \in N({\bf T})$ are nodes, and $p$ is the {\em parent} of $c$. 
A node can have at most one parent, and nodes with the same parent are {\em siblings}. 
Total order $<$ is defined on each group of sibling nodes. 
Two siblings $s_1, s_2$ are {\em contiguous} iff $s_1 < s_2$ and they have no sibling $x$ such that $s_1 < x < s_2$. 
Node $c$ is the $i$-th child of $p$ iff $i = |\{x \in N({\bf T})| (p, x) \in E({\bf T}), x \leq c\}|$. 
The node with no parent is the root node, denoted by $root({\bf T})$, and a node without children is a {\em leaf}. 
Each node $v$ has a {\em label}, $\lambda (v) \in \Sigma$, where $\Sigma$ is a finite alphabet. 
In what follows, such trees are called {\em ordered labeled} trees.
We write, in recursive style, tree ${\bf T}$ as $x(c_1, \ldots, c_n)$ where $x = \lambda(root({\bf T}))$ and $c_1, \ldots c_n$ is a list of subtrees whose root is a child  of the root node.

\subsection{$pq$-gram}
Intuitively, the $pq$-grams of a tree are all subtrees with a specific shape.
Parameters $p,q \in \mathbb{N}$ define $pq$-gram shape.
 To ensure that each node of the tree appears in at least one $pq$-gram, we extend the tree with dummy nodes.
\begin{dfn}[$pq$-Extended Tree]
Let ${\bf T}$ be a tree, and $p > 0$ and $q > 0$  be two integers. The {\em $pq$-extended tree}, ${\bf T}^{p,q}$, is constructed from ${\bf T}$ by 
(i) adding $p-1$ ancestors to the root node, 
(ii) inserting $q-1$ children before the first and after the last child of each non-leaf node, 
and (iii) adding $q$ children to each leaf of ${\bf T}$. All newly inserted nodes are dummy nodes that do not occur in ${\bf T}$ and have a special label ${\tt \ast} \notin \Sigma$.
\label{def:extend}
\end{dfn}
An example of a $pq$-extended tree is given in Figure \ref{fig:pq_flow}-(b).

\begin{dfn}[$pq$-Gram]
Let ${\bf T}$ be a tree, ${\bf T}^{p,q}$ the corresponding extended tree, $p > 0$, $q > 0$.
A subtree of ${\bf T}^{p,q}$ is a {\em $pq$-gram} ${\bf G}$ of ${\bf T}$ iff
(i) ${\bf G}$ has $q$ leaf nodes and $p$ non-leaf nodes, 
(ii) all leaf nodes of ${\bf G}$ are children of a single node, 
and (iii) the leaf nodes of ${\bf G}$ are consecutive siblings in ${\bf T}^{p,q}$.
\end{dfn}
The set of $pq$-grams are extracted by sliding two windows horizontally and vertically over a $pq$-extended tree. Window sizes are $p$ and $q$, respectively~\cite{Augsten:2008:PGD:1670243.1670247} (See Figure \ref{fig:pq_flow}-(c)).
The number of nodes of a $pq$-gram is always $p + q$.
An example of extracted $pq$-grams is given in Figure \ref{fig:pq_flow}-(d).

\subsection{$pq$-gram distance}

We can define a distance measure between trees based on $pq$-grams, which we call the {\em $pq$-gram distance}.
Intuitively, the $pq$-gram distance is the number of $pq$-grams that are not shared by two trees.
The $pq$-gram distance is computed as follows (1) extract all $pq$-grams of input trees, and (2) count the number of non-shared $pq$-grams.
To save space, we use the tuple representation of $pq$-grams.

\begin{dfn}[Label Tuple]
Let ${\bf G}$ be a $pq$-gram with nodes $\{v_1, $ $\ldots, $ $v_p, $$v_{p+1}, $$\ldots $$v_{p+q} \}$ where
$v_i$ is the $i$-th node in the preorder transversal of ${\bf G}$. 
Tuple 
$\lambda^*({\bf G}) $$=$$ (\lambda(v_1), $$\ldots, $$\lambda(v_p), $$\lambda(v_{p+1}), $$\ldots $$\lambda(v_{p+q}) )$
is called the {\em label tuple} of ${\bf G}$ where $\lambda(v_i)$ is the label of node $v_i$.
\end{dfn}

\begin{dfn}[$pq$-Gram Index]
Let $\mathcal{A}$ be the multiset of all $pq$-grams of a tree ${\bf T}$, $p, q \in \mathbb{N}$.
The {\em $pq$-gram index}, $\mathcal{I}^{p,q}({\bf T})$, of ${\bf T}$ is defined as the multiset of label tuples of all $pq$-grams of ${\bf T}$,
i.e., $\mathcal{I}^{p,q}({\bf T}) = \bigcup_{{\bf G} \in \mathcal{A}} \lambda^* ({\bf G})$.
\end{dfn}

Figure \ref{fig:pq_flow}-(e) shows an example of the label tuple and the $pq$-gram index.
The size of the $pq$-gram index is linear in the number of tree nodes~\cite{Augsten:2008:PGD:1670243.1670247}.
Hereafter, if the distinction is clear from the context, we use the term $pq$-gram
for both the $pq$-gram itself and its representation as a label tuple.

\begin{dfn}[$pq$-Gram Distance]
Let ${\bf T}_1$ and ${\bf T}_2$ be trees, $\mathcal{I}_1 = \mathcal{I}^{p,q}({\bf T}_1), \mathcal{I}_2 = \mathcal{I}^{p,q}({\bf T}_2)$, $p,q \in \mathbb{N}$. The {\em $pq$-gram distance}, $dist^{p,q}({\bf T}_1, {\bf T}_2)$, between the trees ${\bf T}_1, {\bf T}_2$ is defined as the size of the symmetric difference between their indexes:
\begin{align}
	    dist^{p,q}({\bf T}_1, {\bf T}_2) = |\mathcal{I}_1 \cup \mathcal{I}_2| - 2 |\mathcal{I}_1 \cap \mathcal{I}_2|,
\end{align}
where $\cup$ is multiset union and $\cap$ is multiset intersection. 
\end{dfn}

\begin{ex}
Consider the $1,2$-gram distance between input trees ${\tt a(b,c)}$ and ${\tt a(c,b)}$, namely $dist^{1,2}({\tt a(b,c), a(c,b)})$.
The $pq$-gram indexes for the input trees are
$\mathcal{I}^{1,2}({\tt a(b,c)}) =$ $\{ \tt (a, \ast, b),$ $\tt (a, b, c),$ $\tt (a, c, \ast),$ $\tt (b, \ast, \ast),$ $\tt (c, \ast, \ast) \} $, and
$\mathcal{I}^{1,2}({\tt a(c,b)}) =$ $\{ \tt (a, \ast, c),$ $\tt (a, c, b),$ $\tt (a, b, \ast),$ $\tt (b, \ast, \ast),$ $\tt (c, \ast, \ast) \} $, respectively.
We have $|\mathcal{I}^{1,2}({\tt a(b,c)}) \cup \mathcal{I}^{1,2}({\tt a(c,b)})| = 5 + 5 = 10$, 
and $|\mathcal{I}^{1,2}({\tt a(b,c)}) \cap \mathcal{I}^{1,2}({\tt a(c,b)})| = 2$, 
then $dist^{1, 2}({\tt a(b,c)}, {\tt a(c,b)}) = 10 - 2 \times 2 = 6$.
\end{ex}

Augsten~\cite{Augsten:2008:PGD:1670243.1670247} showed that
the $pq$-gram distance is {\em pseudo-metric}, that is, the distance can be zero for distinct trees in contrast to a normal metric. The computation time of the $pq$-gram distance is $\mathcal{O}(n \log n)$ for fixed $p,q \in \mathbb{N}$, where $n$ is the number of nodes of input trees. 
Moreover, the $pq$-gram distance approximates edit distance weighted by the number of each node, which is called fanout weighted tree edit distance~\cite{Augsten:2008:PGD:1670243.1670247}.

\subsection{Metric learning with nearest neighbors}
\begin{figure}[t]
    \centering
    \includegraphics[width=\linewidth]{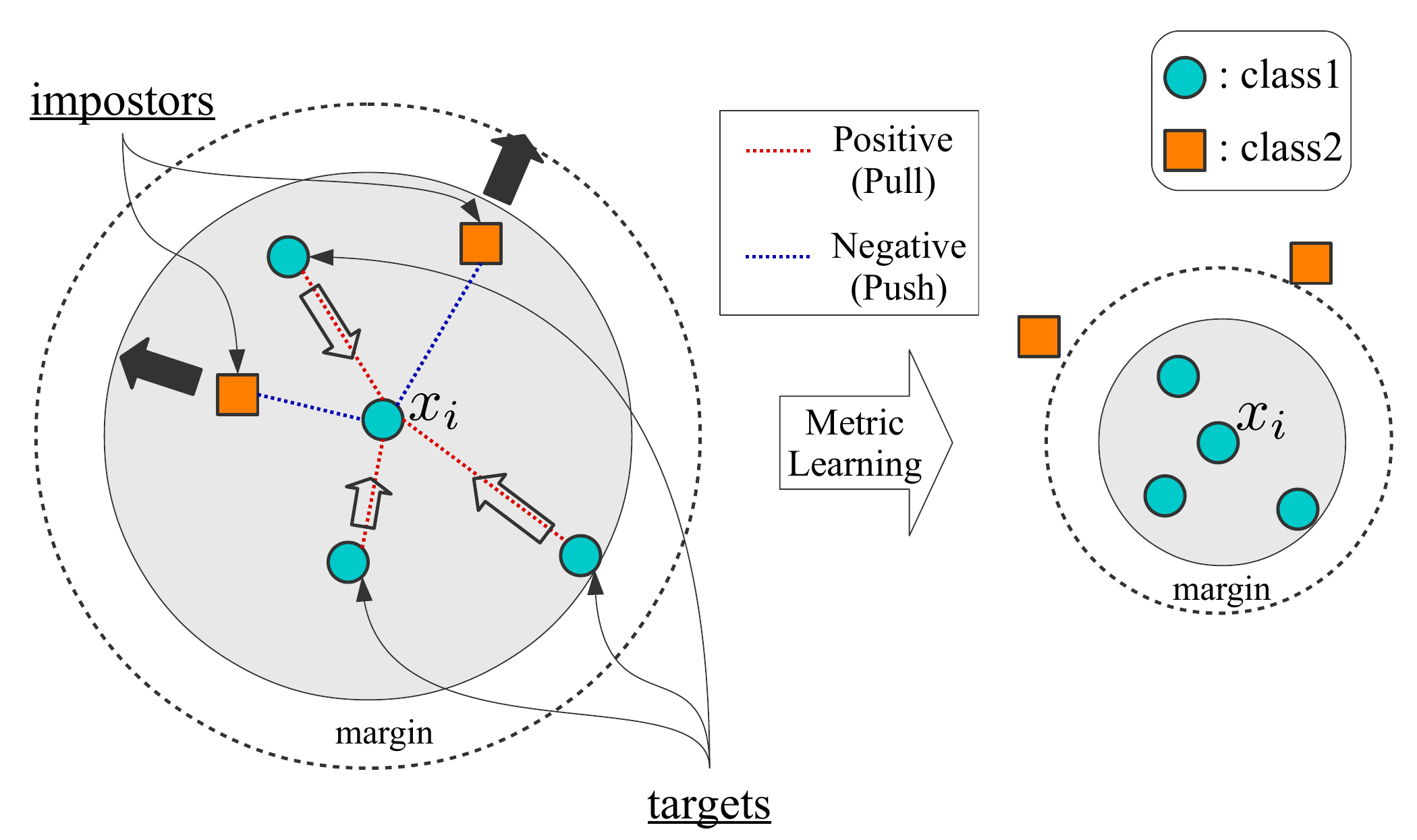}
    \vspace{-1em}
    \caption{An illustration of the LMNN distance learning scheme. For each data point, $x_i$, the neighbors with the same label, called ``targets'', are paired as positive. On the other hand, the neighbors with different labels, called ``impostors'', are paired as negative. After the learning step, targets are brought closer to $x_i$ while impostors are moved further away from $x_i$.}
    \label{fig:lmnn}
\end{figure}

The purpose of metric learning is to adapt the parameterized measure by given positive examples and negative examples.
The metric learning problem is typically formulated as an optimization problem that has the following general form~\cite{DBLP:journals/corr/BelletHS13}:
\begin{align}
    \min_\theta L(d_{\bf \theta}, \mathcal{P}, \mathcal{N})
\end{align}
where $L$ is a loss function that incurs a penalty when training constraints are violated, $d_\theta$ is a distance function parameterized by $\theta$, $\mathcal{P}$ is a set of positive pairs, and $\mathcal{N}$ is a set of negative pairs.
More precisely.
\begin{align}
    \mathcal{P} &= \{(x_i, x_j) : x_i ~\mbox{and} ~ x_j ~\mbox{should be similar}\},\\
    \mathcal{N} &= \{(x_i, x_j) : x_i ~\mbox{and} ~ x_j ~\mbox{should be dissimilar}\}.
\end{align}

Large Margin Nearest Neighbors (LMNN)~\cite{weinberger2009distance} is one of the most widely-used distance learning schemes.
LMNN locally defines the training pairs:
for each data point, same class neighbors, the {\em target neighbors}, are paired as positive,
while different class neighbors, {\em impostors}, are paired as negative.
A schematic illustration of LMNN is given in Figure \ref{fig:lmnn}.

\section{METHOD}
In this section, we introduce a novel approach of metric learning between trees based on $pq$-grams.
The $pq$-gram distance is computed by simply counting the number of $pq$-grams not shared between input trees.
Some $pq$-grams, however, can be important as discriminators for a given classification problem.
We introduce a weight function for $pq$-grams and learn it from examples.

Our approach shares some aspects with edit distance-based approaches~\cite{Bellet2012, Mokbel2015306, pmlr-v80-paassen18a}.
The edit distance is defined by the minimum number of edit operations needed to transform one tree into another. 
Edit distance-based methods primarily learn the cost of edit operations, i.e., they distinguish which edit operations are essential for a given classification problem.
The edit operations are defined between two nodes, so they mostly learn the importance of the relations of nodes.
On the other hand, our approach learns the importance of subtrees of the tree structure.

\subsubsection{$pq$-gram distance with vector representations}
Augsten~\cite{Augsten:2008:PGD:1670243.1670247} represents the $pq$-gram index as a multiset
and compute the distance by operations between multisets.
However, if the set of tree node labels $\Sigma$ is finite, 
the $pq$-gram index can be represented as a vector of fixed dimension.
Vector representation allows us to compute the $pq$-gram distance efficiently.

\begin{dfn}{($pq$-Gram Vector)}
Let $\mathcal{J}$ be the set of all $p q$-grams in dataset $\mathcal{D}$.
For tree ${\bf T} \in \mathcal{D}$ and its $pq$-gram index $\mathcal{I}^{p, q}({\bf T})$,
{\em $p q$-gram vector} is 
a $|\mathcal{J}|$-dim count vector ${\bf v}^{p, q}({\bf T})$.
Each dimension of ${\bf v}^{p, q}({\bf T})$ corresponds to a $pq$-gram. 
\end{dfn}

In order to compute the $pq$-gram distance from counting vectors, 
we introduce a function that computes the number of not shared $pq$-grams.
\begin{dfn}{($pq$-Gram Symmetric Difference Vector)}
Let ${\bf T}_1$ and ${\bf T}_2$ be input trees.
We define the {\em $pq$-gram symmetric difference vector} between ${\bf T}_1$ and ${\bf T}_2$ as:
\begin{align}
    {\bf d}^{p,q}({\bf T}_1, {\bf T}_2) = {\bf v}^{p,q}({\bf T}_1) &+ {\bf v}^{p,q}({\bf T}_2) \notag \\  &- 2 {\bf min}({\bf v}^{p,q}({\bf T}_1), {\bf v}^{p,q}({\bf T}_2)),
\end{align}
where ${\bf min}$ is the element-wise minimum function, i.e., for $n$-dimensional vector ${\bf x}$ and ${\bf y}$,
\begin{align}
    {\bf min}({\bf x}, {\bf y}) = \left( \min(x_i, y_i) \right)_{i=1 \ldots n}.
\end{align}
\end{dfn}

\begin{thm}
Let ${\bf T}_1$ and ${\bf T}_2$ be trees.
The $pq$-gram distance equals the sum of the elements of the $pq$-gram symmetric difference vector, i.e.,
\begin{align}
    dist^{p,q}({\bf T}_1, {\bf T}_2) = {\bf 1}^\mathrm{T} \cdot {\bf d}^{p,q}({\bf T}_1, {\bf T}_2),
\end{align}
where ${\bf 1}$ is the all-one vector and ${\bf d}^{p,q}({\bf T}_1, {\bf T}_2)$ is the $pq$-gram symmetric difference vector.
\end{thm}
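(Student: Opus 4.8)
The plan is to reduce both sides of the claimed identity to one and the same sum, taken over all distinct $pq$-grams, of the per-gram contribution $m_1(g) + m_2(g) - 2\min(m_1(g), m_2(g))$, where $m_k(g)$ denotes the multiplicity of the $pq$-gram $g$ in the index $\mathcal{I}_k = \mathcal{I}^{p,q}({\bf T}_k)$ for $k = 1, 2$. First I would fix an enumeration $g_1, \ldots, g_{|\mathcal{J}|}$ of the $pq$-grams in $\mathcal{J}$ and observe that, directly from the definition of the $pq$-gram vector, the $i$-th component of ${\bf v}^{p,q}({\bf T}_k)$ equals $m_k(g_i)$, which is $0$ when $g_i$ does not occur in ${\bf T}_k$.

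Next I would rewrite the two multiset cardinalities appearing in $dist^{p,q}$ coordinate-wise. The key observation, consistent with the worked example where $|\mathcal{I}_1 \cup \mathcal{I}_2| = 5 + 5 = 10$, is that the multiset union here is additive: the multiplicity of $g_i$ in $\mathcal{I}_1 \cup \mathcal{I}_2$ is $m_1(g_i) + m_2(g_i)$, so that $|\mathcal{I}_1 \cup \mathcal{I}_2| = \sum_i \left( m_1(g_i) + m_2(g_i) \right)$. Likewise, the multiplicity of $g_i$ in the multiset intersection is $\min(m_1(g_i), m_2(g_i))$, giving $|\mathcal{I}_1 \cap \mathcal{I}_2| = \sum_i \min(m_1(g_i), m_2(g_i))$. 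Substituting these into the definition of the distance yields
\begin{align}
    dist^{p,q}({\bf T}_1, {\bf T}_2) = \sum_i \left( m_1(g_i) + m_2(g_i) - 2\min(m_1(g_i), m_2(g_i)) \right).
\end{align}

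Finally I would identify the summand with the $i$-th component of the symmetric difference vector. By the definition of ${\bf d}^{p,q}$ together with that of the element-wise minimum ${\bf min}$, its $i$-th component is precisely $m_1(g_i) + m_2(g_i) - 2\min(m_1(g_i), m_2(g_i))$; hence taking the inner product with the all-one vector ${\bf 1}$ sums exactly these contributions and reproduces the right-hand side of the displayed equation, closing the identity.

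I expect no deep obstacle here, since the statement is essentially a bookkeeping correspondence between multiset operations and their coordinate representations. The one point that must be handled with care is the semantics of the union $\cup$: the argument requires it to be the additive union, where multiplicities add, rather than a max-union, and I would justify this by appealing to the convention fixed in the distance definition and confirmed numerically in the example. The only algebraic fact invoked is the identity $a + b - 2\min(a,b) = |a - b|$ for nonnegative integers, which also confirms that each summand is the genuine symmetric-difference multiplicity $|m_1(g_i) - m_2(g_i)|$.
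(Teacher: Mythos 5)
Your proof is correct and complete. Note that the paper itself offers no proof of this theorem --- it is stated without argument, evidently regarded as immediate from the definitions --- so there is no authorial proof to diverge from; your coordinate-wise reduction is the natural (and essentially only) argument. You also correctly pin down the one genuinely delicate point, namely that $\cup$ must be read as the \emph{additive} multiset union (multiplicities add) rather than the max-union, and your justification via the worked example ($|\mathcal{I}_1 \cup \mathcal{I}_2| = 5 + 5 = 10$) is exactly the right evidence. This reading is further corroborated by the paper's own proof of the pseudo-metric theorem, where the per-gram contribution is written as $a_g(i + j - 2\min(i,j)) = a_g|i - j|$ --- precisely the summand $m_1(g_i) + m_2(g_i) - 2\min(m_1(g_i), m_2(g_i)) = |m_1(g_i) - m_2(g_i)|$ that your decomposition produces with $a_g = 1$. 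In that sense your argument is not only correct but supplies the missing lemma on which the paper's later triangle-inequality proof implicitly relies.
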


\subsection{Computing the weighted $pq$-gram distance}
In this section, we introduce the {\em weighted $pq$-gram distance} to perform metric learning based on $pq$-grams.
The weight reflects the ``importance'' of each $pq$-gram, and allows the $pq$-gram distance to yield highly granular classification.
In order to make the distance function differentiable and always positive, we use the softplus function.
It enables us to learn the distance function by gradient descent techniques and retain the triangle inequality during the learning process.

\subsubsection{Weighted $p q$-gram distance}
\begin{dfn}[Softplus function]
The softplus function is defined as:
\begin{align}
    \mathit{softplus}(x) = \ln(1 + e^x).
\end{align}
The softplus function always returns positive values. 
In order to prevent weight parameters from being negative, we apply the softplus function to the parameters.
The softplus function is differentiable with respect to the input variables, and enables us to learn distance parameters
by gradient descent techniques.
\end{dfn}
\begin{dfn}[Weighted $pq$-Gram Distance]
Let ${\bf T}_1$ and ${\bf T}_2$ be input trees.
The {\em weighted $pq$-gram distance}, $dist^{p,q}_{{\bf w}}({\bf T}_1, {\bf T}_2)$ is defined as follows:
\begin{align}
	    dist^{p,q}_{{\bf w}}({\bf T}_1, {\bf T}_2) = {\bf a}({\bf w})^\mathrm{T} \cdot {\bf d}^{p,q}({\bf T}_1, {\bf T}_2),
\end{align}
where
${\bf a}({\bf w}) = (\mathit{softplus}(w_i))_{i=1 \ldots |{\bf w}|} =  \left( \ln \left(1 + e^{ w_i} \right) \right)_{i=1 \ldots |{\bf w}|}$.\\
${\bf w}$ is a parameter that we learn. 
${\bf d}^{p,q}({\bf T}_1, {\bf T}_2)$ is the $pq$-gram symmetric difference vector between ${\bf T}_1$ and ${\bf T}_2$.
\end{dfn}

\begin{thm}
The weighted $p q$-gram distance is pseudo-metric, i.e., satisfies the following conditions:\\
(i) non-negativity: $dist^{p,q}_{{\bf w}}(x,y) \geq 0$\\
(ii) reflexivity: $x = y \Rightarrow dist^{p,q}_{{\bf w}}(x,y)=0$\\
(iii) symmetry: $dist^{p,q}_{{\bf w}}(x,y) = dist^{p,q}_{{\bf w}}(y,x)$\\
(iv) triangle inequality: $dist^{p,q}_{{\bf w}}(x,y) + dist^{p,q}_{{\bf w}}(y,z)  \geq dist^{p,q}_{{\bf w}}(x,z)$
\end{thm}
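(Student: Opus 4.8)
The plan is to recognize the weighted $pq$-gram distance as a weighted $\ell_1$ distance on the $pq$-gram vectors, after which all four pseudo-metric axioms follow from elementary properties of the absolute value together with the strict positivity of the softplus weights. Throughout I would write ${\bf v}^{p,q}({\bf T}_1) = (v_i^{(1)})_i$ and ${\bf v}^{p,q}({\bf T}_2) = (v_i^{(2)})_i$, and let $a_i({\bf w}) = \mathit{softplus}(w_i) = \ln(1 + e^{w_i}) > 0$ denote the $i$-th weight.

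First I would establish the key identity that each component of the symmetric difference vector is an absolute difference. The $i$-th component of ${\bf d}^{p,q}({\bf T}_1,{\bf T}_2)$ is $v_i^{(1)} + v_i^{(2)} - 2\min(v_i^{(1)}, v_i^{(2)})$, and a case split on whether $v_i^{(1)} \geq v_i^{(2)}$ shows that this equals $|v_i^{(1)} - v_i^{(2)}|$. Hence $dist^{p,q}_{{\bf w}}({\bf T}_1, {\bf T}_2) = \sum_i a_i({\bf w})\,|v_i^{(1)} - v_i^{(2)}|$, a weighted sum of componentwise absolute differences with strictly positive weights. This representation is the conceptual crux; once it is in hand the rest is routine.

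Given this form, properties (i)--(iii) are immediate. Non-negativity holds because each summand is a product of a positive weight $a_i({\bf w})$ and a non-negative quantity $|v_i^{(1)} - v_i^{(2)}|$. Reflexivity holds because $x = y$ forces ${\bf v}^{p,q}(x) = {\bf v}^{p,q}(y)$, so every component vanishes and the sum is zero. Symmetry holds because $|v_i^{(1)} - v_i^{(2)}| = |v_i^{(2)} - v_i^{(1)}|$ for each $i$.

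The only condition requiring genuine work is (iv), and even that reduces to the scalar triangle inequality applied componentwise. For trees $x, y, z$ with vectors $(v_i^{x})_i, (v_i^{y})_i, (v_i^{z})_i$, I would invoke $|v_i^{x} - v_i^{z}| \leq |v_i^{x} - v_i^{y}| + |v_i^{y} - v_i^{z}|$ for each $i$, multiply through by the positive weight $a_i({\bf w})$ (which preserves the inequality), and sum over $i$ to obtain $dist^{p,q}_{{\bf w}}(x,z) \leq dist^{p,q}_{{\bf w}}(x,y) + dist^{p,q}_{{\bf w}}(y,z)$. No step is computationally hard; the main thing to get right is the absolute-value identity above. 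It is worth noting that only $a_i({\bf w}) \geq 0$ is strictly needed for the four axioms, so softplus is more than sufficient, and that the converse of reflexivity can fail---distinct trees may produce identical $pq$-gram vectors---which is exactly why one obtains a pseudo-metric rather than a metric, consistent with the unweighted case.
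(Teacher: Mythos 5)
Your proof is correct and follows essentially the same route as the paper: both hinge on the identity $a + b - 2\min(a,b) = |a-b|$, which turns the weighted $pq$-gram distance into a positively weighted sum of absolute differences of occurrence counts, after which the triangle inequality follows termwise from $|i-j|+|j-k|\geq|i-k|$. The only cosmetic difference is that you phrase the argument in terms of the components of the $pq$-gram count vectors while the paper phrases it in terms of multiplicities of each $pq$-gram $g$ in the multisets $\mathcal{A}_x$, $\mathcal{A}_y$, $\mathcal{A}_z$; these are the same quantities.
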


\begin{proof}
(i), (ii), and (iii) are clear by definition.\\
(iv)
Let $\mathcal{A}_x$, $\mathcal{A}_y$, and $\mathcal{A}_z$ be the multisets of extracted $pq$-grams of trees $x$, $y$, and $z$, respectively.
Let $g$ be a $pq$-gram with $g \in \mathcal{A}_x \cup \mathcal{A}_y \cup \mathcal{A}_z$,
and $i$, $j$, and $k$ be the numbers of occurrences of $g$ in $\mathcal{A}_x$, $\mathcal{A}_y$, and $\mathcal{A}_z$, respectively.
Let $d_g(x, y)$, $d_g(y, z)$, $d_g(x, z)$ be the contributions of $g$ to the distance $dist^{p,q}_{{\bf w}}(x,y)$, $dist^{p,q}_{{\bf w}}(y,z)$, and  $dist^{p,q}_{{\bf w}}(x,z)$, respectively. The distance is computed as the sum of the contributions: $dist^{p,q}_{{\bf w}}(x,y) = \sum_{g \in \mathcal{A}_x \cup \mathcal{A}_y \cup \mathcal{A}_z} d_g(x, y)$.
Here, $ d_g(x,y) = a_g(i + j - 2 \min(i, j)) = a_g |i - j|$, $d_g(y,z) = a_g |j - k|$, and $d_g(x,z) = a_g |i - k|$, where $a_g$ is a weight parameter for $g$. 
Note that $a_g$ is positive since it is output by the softplus function.
Therefore, we have $dist^{p,q}_{{\bf w}}(x,y) + dist^{p,q}_{{\bf w}}(y,z) = \sum_{g \in \mathcal{A}_x \cup \mathcal{A}_y \cup \mathcal{A}_z} (d_g(x, y) +  d_g(y,z)) = \sum_{g \in \mathcal{A}_x \cup \mathcal{A}_y \cup \mathcal{A}_z} a_g (|i-j| + |j-k|) \geq \sum_{g \in \mathcal{A}_x \cup \mathcal{A}_y \cup \mathcal{A}_z} a_g |i-k| = dist^{p,q}(x,z)$.
\end{proof}

\subsection{Learning the weighted $pq$-gram distance}
The steps to perform metric learning are 
(1) create training pairs from a given dataset,  
(2) set an appropriate loss function for the training pairs,
and (3) optimize the loss function with respect to the parameter of the target distance function.

We generate training pairs following the LMNN scheme.
For every data point $x_i$ with class label $y_i$ in given dataset $\mathcal{D}$, we create positive and negative pairs as follows:
\begin{align}
    \mathcal{P}_i &= \left\{(x_i, x_j): x_j  \in N^{+}_k(x_i) \right\}, \label{eq:pos_pair}\\
    \mathcal{N}_i &= \left\{(x_i, x_j): x_j  \in N^{-}_k(x_i) \right\}, \label{eq:neg_pair}
\end{align}
where $N^{+}_k(x_i)$ is the ``target'' function that returns the set of $k$-nearest neighbors  with the same label $y_i$,
and $N^{-}_k(x_i)$ is an ``impostor'' function that returns the set of data points with different labels and are closer than the farthest ($k$-th) target.
We now define the set of positive pairs as $\mathcal{P} = \bigcup_{i=1 \ldots m} \mathcal{P}_i$, and the set of negative pairs as $\mathcal{N} = \bigcup_{i=1 \ldots m} \mathcal{N}_i$,
where $m$ is the number of training data points. We note that the training pairs are created with the default distance metric before the learning step.

Now we introduce the loss function based on the hinge loss:
\begin{align}
    \label{eq:loss}
   L(dist^{p,q}_{\bf w}, \mathcal{P}, \mathcal{N}) =  & \beta ||{\bf w}||^2 + \sum_{({\bf T}_1, {\bf T}_2) \in \mathcal{P} } [dist^{p,q}_{{\bf w}}({\bf T}_1, {\bf T}_2) - \mu_1]_+ \notag \\ &+ \sum_{({\bf T}_1, {\bf T}_2) \in \mathcal{N} } [\mu_2 - dist^{p,q}_{{\bf w}}({\bf T}_1, {\bf T}_2) ]_+,
\end{align}
where $\beta$ is a regularization coefficient, $[x]_+ = \max(0, x)$, and   $\mu_1$ and $\mu_2$ are constants that represent margins. 
The first term is the L2 regularization term, the second term is a loss that makes positive pairs closer, and the third term is a loss that makes negative pairs farther apart.
The hinge loss-based formulation is widely-used in margin-based methods such as soft-margin SVM~\cite{Chen04}, LMNN~\cite{weinberger2009distance}, and GESL~\cite{Bellet2012}.
If a positive (resp. negative) pair satisfies a certain criterion, i.e., it is close (resp. far) enough, then it does not contribute to the loss function. 

We minimize the loss function by gradient descent. In order to perform gradient descent, we need the gradient $\nabla_{\bf w} dist^{p,q}_{\bf w}({\bf T}_1, {\bf T}_2)$
for input trees ${\bf T}_1$ and ${\bf T}_2$.
The gradient of weighted $p q$-gram distance function with respect to ${\bf w}$ is computed as follows:
\begin{align}
    \nabla_{\bf w} dist^{p,q}_{\bf w}({\bf T}_1, {\bf T}_2) = \left(\frac{ e^{ w_i}}{1 + e^{ w_i}} d^{p,q}_i({\bf T}_1, {\bf T}_2) \right)_{i= 1 \ldots |{\bf w}|}, 
\end{align}
where $w_i$ is the $i$-th element of ${\bf w}$ and $d^{p,q}_i({\bf T}_1, {\bf T}_2)$ is the $i$-th element of  the $pq$-gram symmetric difference vector ${\bf d}^{p,q}({\bf T}_1, {\bf T}_2)$.

The learning procedure is summarized as follows:
First, the $pq$-gram vector representations are computed for all input trees.
We note that the computed $pq$-gram vector representations are used internally by the distance function $dist^{p,q}_{\bf w}$ in the following steps.
Second, the set of positive pairs and the set of negative pairs are created by Eq. (\ref{eq:pos_pair}) and Eq. (\ref{eq:neg_pair}), respectively, from a given training dataset.
Finally we minimize the loss function in Eq. (\ref{eq:loss}) with respect to ${\bf w}$ by gradient descent.
In practice, we update impostors during the learning process to improve model performance.

\section{EXPERIMENTS}

In this section, we discuss our experiments on artificial and real-world datasets.
The experiments are designed to show that
the proposed approach not only achieves competitive results with state-of-the-art edit distance-based methods in various classification problems, but also solves the classification problems much faster than edit distance-based methods.
All experiments were performed on a desktop computer with 
Intel(R) Xeon(R) E5-2680 v2 @ 2.80GHz CPU,
126GB RAM, and CentOS Linux 7.4.

We used our Python 3.6 and PyTorch~\cite{pytorch} implementation for LMNN learning with the weighted $pq$-gram distance. 
The tree edit distance algorithm is implemented following~\cite{paasen18-supp}.
We adopted Adam optimizer~\cite{Adam} to optimize parameters in our training phase.
We also used a Java implementation\footnote{https://pub.uni-bielefeld.de/data/2919994} for BEDL and GESL.

\subsection{Dataset}
We evaluated our approach on one artificial dataset and several real-world datasets.

\begin{figure*}[t]
    \centering
    \includegraphics[width=\linewidth]{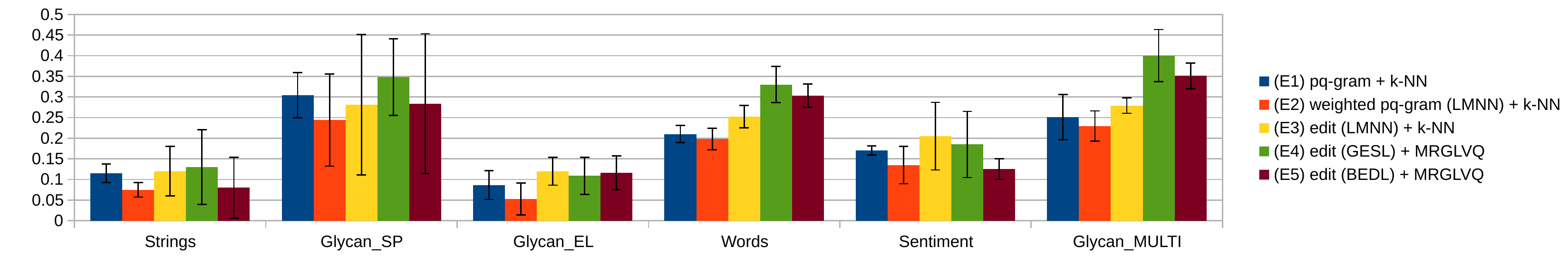}
    \caption{The figure indicates error rates for each real-world datasets. The bottom labels specify the datasets. The bar represents the error rate for each method. The vertical black line represents the standard deviation of the results of the folds. For each dataset, the bar corresponds to each combination of distance and classifier:
(E1) the $pq$-gram distance and the $k$-nearest neighbor classifier, 
(E2) the weighted $pq$-gram distance with LMNN and the $k$-nearest neighbor classifier (proposed),
(E3) the edit distance with LMNN and the $k$-nearest neighbor classifier,
(E4) the edit distance with GESL and the MRGLVQ classifier,
(E5) the edit distance with BEDL and the MRGLVQ classifier.
 }
    \label{fig:experiment1}
\end{figure*}

\paragraph{Strings}
This dataset consists of two classes of strings with a length of exactly nine. 
The first class is drawn randomly from the set of strings expressed by $(({\tt A} | {\tt B})({\tt C} | {\tt D})({\tt A} | {\tt B}))^3$ in the regular expression.
For the second class, we do the same from the set of strings expressed by  $({\tt A}|{\tt B}|{\tt C}|{\tt D})^9$ in the regular expression.
We can observe that substring ${\tt DAD}$ never appears in the first class, and 
every string in the first class is generated in a ``periodic way'' 
unlike those in the second class. These facts are important for classifying these strings.
We can regard a string as a tree (without branching nodes) in a natural way.
We used 100 strings for each class.
\vspace{-1em}
\paragraph{Glycan}
We used two datasets from KEGG database~\cite{10.1093/glycob/cwj010} as used in ~\cite{yamazaki15}.
Glycans are defined as the third major class of biomolecules following DNA and proteins.
Each monosaccharide in a glycan structure is connected to one or
more monosaccharides, and we can regard a glycan structure as a labeled tree.
CarbBank/CCSD~\cite{10.1093/glycob/2.6.505} gives the class labels for glycans.
The trees have node labels and edge labels. We put edge labels into corresponding child nodes in the same way as~\cite{yamazaki15}.
For instance, subtree $\frac{\tt 1b4}{} {\tt Gal} \frac{\tt 1a3}{} {\tt Fuc} $ is represented as ${\tt Gal.1b4 - Fuc.1a3}$.
Every leaf node is represented by the special label ${\tt \$}$.
Each glycan structure is assigned to a blood component class among Erythrocyte, Leukemic, Serum, and Plasma.
We created two binary classification problems (i) Erythrocyte/Leukemic (Glycan\_EL) and (ii) Serum/Plasma (Glycan\_SP).
These problems have 138 trees and 267 trees, respectively.
For evaluating multi-label classification, we also used the dataset that contains all instances (Glycan\_MULTI) in accuracy comparison.
Glycan\_MULTI contains 405 trees and 4 class labels.
\vspace{-1em}
\paragraph{Words}
The words dataset used in  \cite{Bellet2012} contains English words and French words extracted from Wiktionary\footnote{http://en.wiktionary.org/wiki/Wiktionary:Frequency\_lists}. 
It consists of basic English/French words in order of frequency of use.
We can regard a word as a tree (without branching nodes) in a natural way.
Every leaf node is represented by the special label ${\tt \$}$.
We considered only words of length at least 4 to remove articles and prepositions.
We used the top 500 words for each class.
\vspace{-1em}
\paragraph{Sentiment Treebank}
Sentiment Treebank dataset contains movie reviews with their parse trees.
The internal nodes have one of the 5-class labels from highly negative to highly positive.
We set the class label as the sentence is positive or negative as a whole, i.e., the root node's label. 
Every root node is replaced by a unique node whose label is ${\tt root}$. 
Every leaf node represents a specific word in the review sentence.
We replaced each word with its POS tag for scaling.
We randomly chose 100 trees for each class. 

The datasets we used are summarized in Table \ref{tab:datasets}.
\begin{table}
    \centering
    \begin{tabular}{c|c c c | c}
    Dataset & trees & node labels & mean tree size & class\\ \hline
    Strings & 200 & 4 & 9.0 & 2\\
    Glycan\_SP &  138 & 48 & 10.9 & 2\\
    Glycan\_EL &  267 & 37 & 13.4 & 2\\
    Words& 1000 & 28 & 6.8 & 2\\
    Sentiment & 200 & 39 & 39.2 & 2\\
    Glycan\_MULTI & 405 & 51 & 12.5 & 4\\\hline
    \end{tabular}
    \caption{The description of datasets used in our experiments. The first column shows the number of trees in each dataset. The second column shows the number of node labels of trees in each dataset. The third column shows the mean size of trees (number of nodes in a tree) for each dataset. The fourth column shows the number of classes.}
    \label{tab:datasets}
    \vspace{-1.5em}
\end{table}

\subsection{Accuracy comparison}
We evaluated several classification problems using different models with different distance measures.
The problem setting has 3 parts: 
(i) the distance measure used by the classification model, 
(ii) the metric learning algorithm, 
and (iii) the distance-based classification model.
We compared the following 5 settings:
(E1) the $pq$-gram distance and the $k$-nearest neighbor classifier, 
(E2) the weighted $pq$-gram distance with LMNN and the $k$-nearest neighbor classifier (proposed),
(E3) the edit distance with LMNN and the $k$-nearest neighbor classifier,
(E4) the edit distance with GESL and the MRGLVQ classifier~\cite{Bellet2012,NEBEL2015295},
and (E5) the edit distance with BEDL and the MRGLVQ classifier~\cite{NEBEL2015295, pmlr-v80-paassen18a}.

On each data set, we performed 5-fold cross-validation and compared the mean test error across the folds.
In the setting (E1) and (E2), we set $p=2$, $q=2$ as $pq$-gram size.
We set $k=1$ for Strings and Glycan\_MULTI dataset, and $k=3$ for others, where $k$ is the number of neighbors for the $k$-nearest neighbor classifier and the number of ``targets'' of LMNN learning.
As these parameters affect the classification results, we analyze their impact in the next subsection.
Other parameters were set as follows: $\mu_1 = \mu_2 = 5.0$ for margin parameters, $\eta = 10^{-2}$ for the initial learning rate of the Adam optimizer~\cite{Adam}, $\beta = 10^{-4}$ for the L2 regularization.
We trained the model for 600 epochs.
We updated impostors for LMNN every 50 epochs.
In the LMNN learning step, we randomly chose 200 training data points if the number of input training data points is more than 200. 
In the setting (E3), we fixed the optimal edit operations that are computed for the first time of the learning algorithm, in the same way, as~\cite{DBLP:journals/corr/BelletHS13}
and performed metric learning using the LMNN scheme with gradient descent with respect to the edit costs.
The parameters for the settings (E4) and (E5) are selected by nested cross-validation following \cite{pmlr-v80-paassen18a}.

Figure \ref{fig:experiment1} shows the results of our experiments.
On each dataset, the $k$-nearest neighbor classifier with the weighted $pq$-gram distance (E2) achieves a lower error rate than the $pq$-gram distance (E1) .
Moreover, for all datasets, the $k$-nearest neighbor classification with the weighted $pq$-gram distance (E2) achieves better results than that with the tree edit distance (E3).
Also, our approach achieves competitive results with the state-of-the-art edit distance-based methods such as GESL and BEDL with the MRGLVQ classifier (E4) (E5).

\subsubsection{Effects of parameters}
\begin{figure}[t]
  \begin{center}
    \begin{tabular}{c}
      \begin{minipage}{0.5\hsize}
        \begin{center}
                          {\footnotesize  Strings}
          \includegraphics[clip, width=\linewidth]{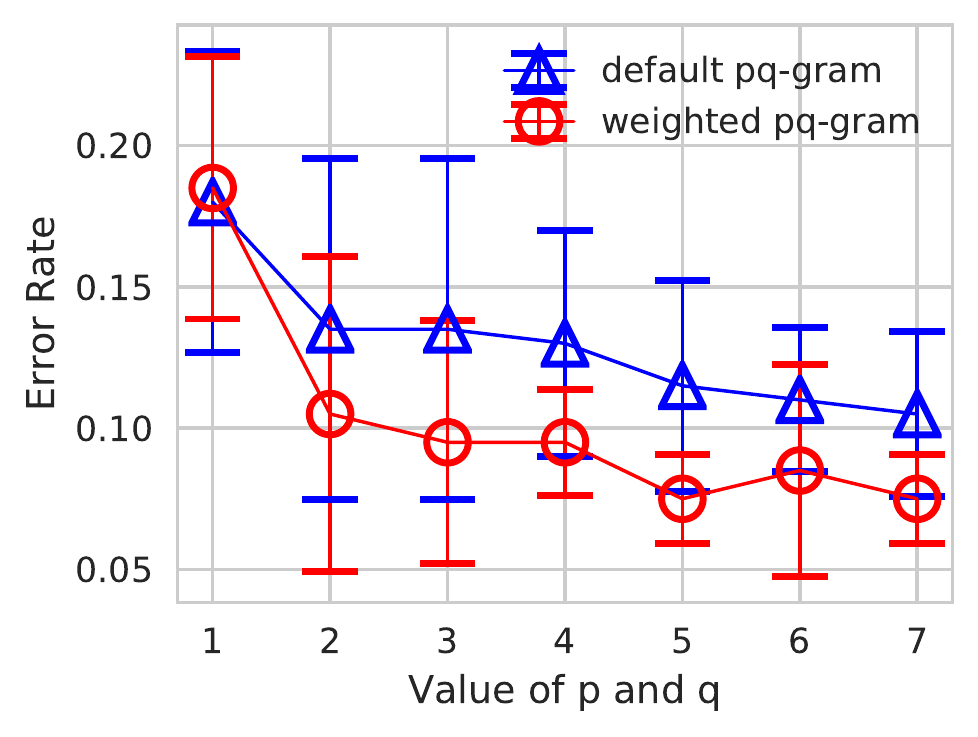}
        \end{center}
      \end{minipage}
      \begin{minipage}{0.5\hsize}
        \begin{center}
                          {\footnotesize  Glycan\_SP}
          \includegraphics[clip, width=\linewidth]{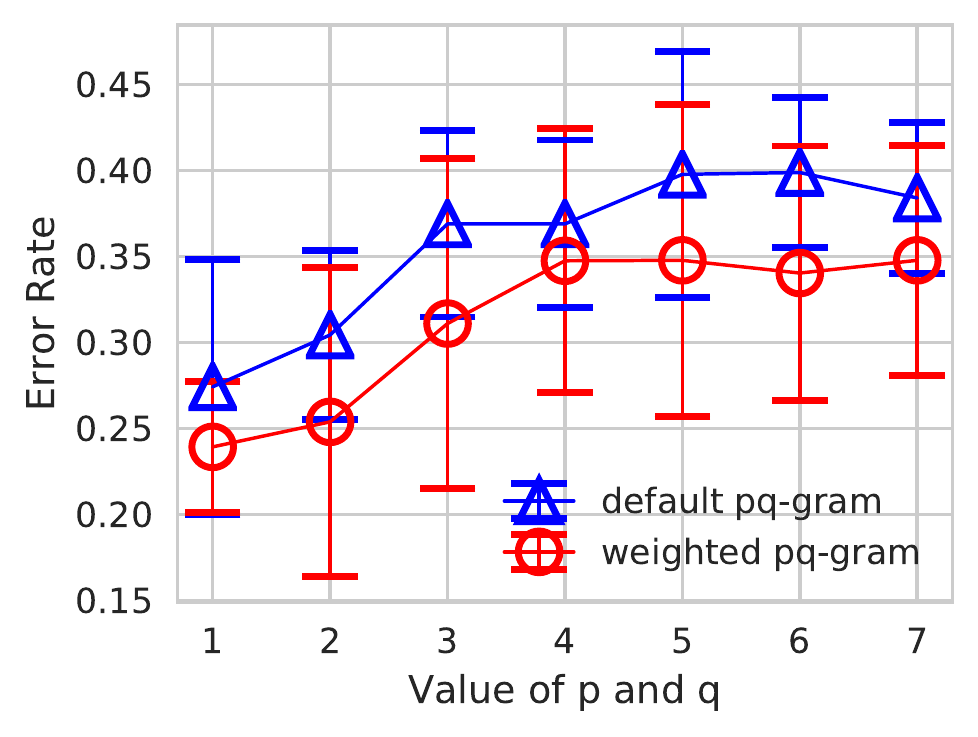}
        \end{center}
      \end{minipage}
    \end{tabular}
  \end{center}
  \vspace{-1.0em}
  \caption{The figure indicates error rates with standard deviations for $1 \le p=q \le 7$. The horizontal axis represents the value of $p$ and $q$. The vertical axis represents the error rate of $k$-nearest neighbor classifier.
  The blue line represents the error rate with the $pq$-gram distance, and the red line represents the error rate with the weighted $pq$-gram distance.}
    \label{fig:pqvalues}
  \vspace{-1.0em}
\end{figure}
\begin{figure}[t]
  \begin{center}
    \begin{tabular}{c}
      \begin{minipage}{0.5\hsize}
        \begin{center}
                {\footnotesize  Strings}
          \includegraphics[clip, width=\linewidth]{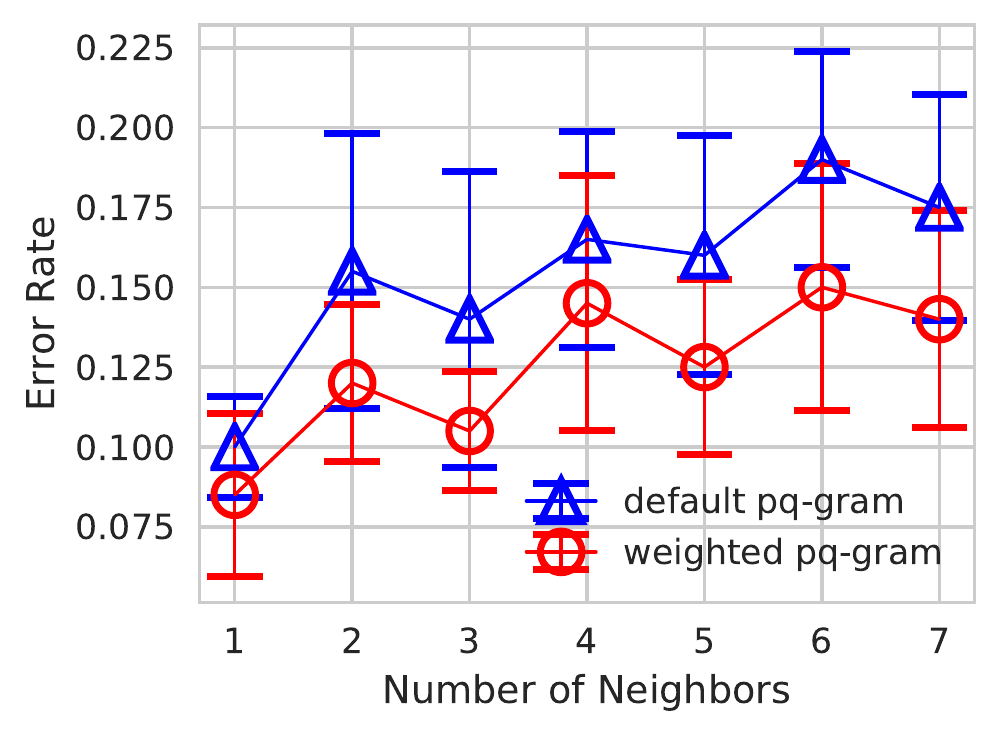}
        \end{center}
      \end{minipage}
      \begin{minipage}{0.5\hsize}
        \begin{center}
                          {\footnotesize  Glycan\_SP}
          \includegraphics[clip, width=\linewidth]{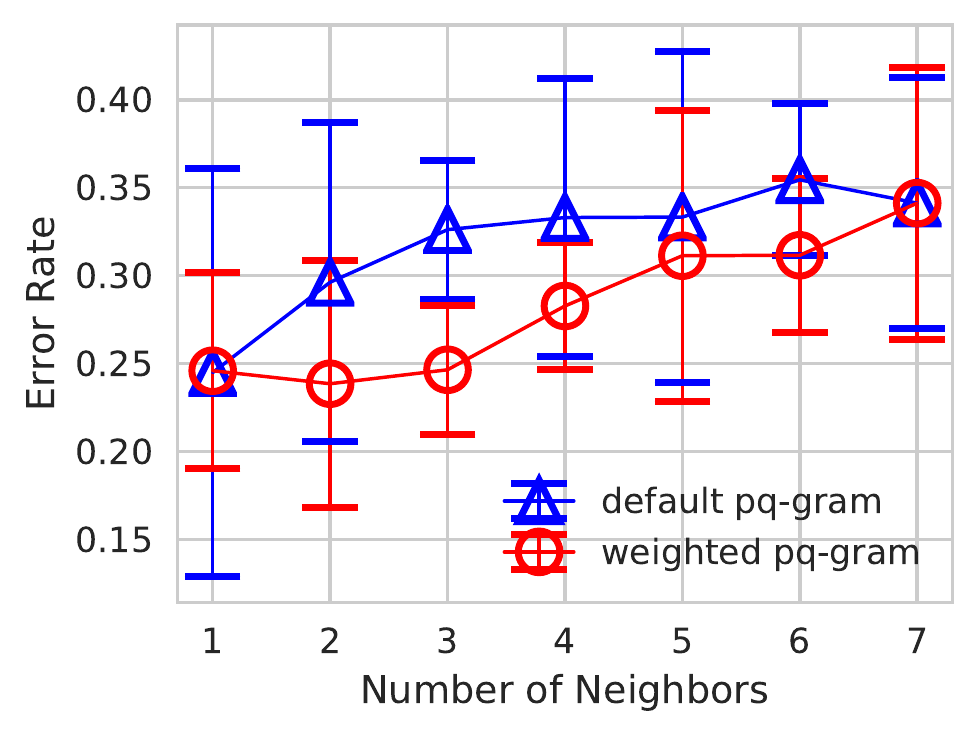}
        \end{center}
      \end{minipage}
    \end{tabular}\\
  \end{center}
  \vspace{-1.0em}
\caption{The figure indicates error rates with standard deviations for $1 \le k \le 7$. The horizontal axis represents the number of neighbors for the $k$-nearest neighbor classifier and the LMNN learning algorithm. The vertical axis represents the error rate of the $k$-nearest neighbor classifier. The blue line represents the error rate with the $pq$-gram distance, and the red line represents the error rate with the weighted $pq$-gram distance.}
  \vspace{-1.0em}
        \label{fig:kvalues}
\end{figure}
Since the values of $p$ and $q$ determine the shape of $pq$-grams, they can affect the error rates of classification.
We analyze the effect on our proposed method of changing the values of $p$ and $q$ (E2).
In particular, we investigate the error rate of the 3-nearest neighbor classifiers with
the conventional $pq$-gram distance and with the weighted $pq$-gram distance.
The values $p$ and $q$ are chosen as $1 \le p = q\le 7$.
Figure \ref{fig:pqvalues} shows the transition in error rates with respect to $p$ and $q$.
For the Strings dataset,
the case $p = q = 1$ is worst among all, which follows our intuition that every string in the first class is composed of substrings of length three, and these substrings can be seen only in the case where $p = q \ge 2$.
For the Glycan\_SP dataset, the error rates gradually increase as both $p$ and $q$ become large.
In both datasets, for all $p$ and $q$ values except for the case $p=q=1$ in the Strings dataset, the $k$-nearest neighbor classifier with the weighted $pq$-gram distance 
outperformed that with the conventional $pq$-gram distance.

We also analyze the effect of the number of neighbors $k$ on the $k$-nearest neighbor classifier and the LMNN metric learning scheme.
In the training step, we created training pairs with $k$ targets for each tree. 
In the classification step, we performed the $k$-nearest neighbor classification.
Figure \ref{fig:kvalues} shows the transition in error rate with respect to $k$.
Interestingly, in the Strings dataset, the case of $k=1$ achieves the highest accuracy among all.
With regard to the number of neighbors $k$ for the $k$-nearest neighbor classifier,
Hastie \cite{TESL} pointed out that the best $k$ value is situation dependent.
We highlight the fact that our proposed distance outperformed the conventional $pq$-gram distance regardless of the $k$ values in both datasets except for the case $k=1, 7$ in the Glycan\_SP dataset.

\subsubsection{Interpretability}
In this subsection, we discuss the interpretability of our method.
We can consider that $pq$-grams with substantial weights are important discriminators for classification problems 
since they essentially determine the classification results.

We exhibit some $pq$-grams receiving substantial weights and their occurrences in each class in Strings, Glycan\_EL, and Words dataset as in Table \ref{tab:top_grams}. 
We can observe that high-weight $pq$-grams tend to appear many times in one of the classes, but not so much in the other classes.
This fact implies that $pq$-grams with substantial weights are important features for classifying trees.
In the Glycan\_EL dataset, for example,
$\tt (GlcNAc.1b4, Gal.1b3, \$, *)$ appears only with class 1. 
It means that subtree ${\tt  \frac{1b4}{} GlcNAc \frac{1b3}{} Gal }$ which contains the leaf node, 
is a key feature for class 1.
In the Words dataset, $pq$-gram ${\tt (e, u, r,*)}$ appears on French words 14 times, but never on English words,
which means French words in the dataset often have substring ``eur'', but English words do not.

\begin{table}
    \centering
    \begin{tabular}{c|c|c|c}
        Dataset&$pq$-gram & class1 & class2\\ \hline\hline
&$\tt (B, B, A, *)$&0&9\\
&$\tt (D, A, *, *)$&21&8\\
Strings&$\tt (A, D, B, *)$&45&12\\
&$\tt (C, A, *, *)$&26&2\\
&$\tt (D, A, D, *)$&0&11\\\hline
&$\tt (*, Xyl., Gal.1b4, *)$&3&0\\
&$\tt (GlcNAc.1b4, Gal.1b3, \$, *)$&2&0\\
Glycan\_EL&$\tt (Gal.1b4, GlcNAc.1b3, \$, *)$&4&0\\
&$\tt (Gal., GlcNAc.1b3, \$, *)$&0&1\\
&$\tt (GalNAc.1b3, GalNAc.1a3, *, \$)$&2&0\\\hline
&$\tt (c, h, *, \$)$&6&0\\
&$\tt (o, n, *, e)$&9&0\\
Words&$\tt (s, e, u, *)$&0&4\\
&$\tt (e, u, r, *)$&0&14\\
&$\tt (o, i, s, *)$&0&6\\\hline
    \end{tabular}
    \caption{Extracted high-weight $pq$-grams and appearance number in each class. Right 2 columns show the number of occurrences of $pq$-grams in each class.}
    \label{tab:top_grams}
    \vspace{-2.5em}
\end{table}

\subsection{Running time comparison}

In order to show the practical performance of our method,
we compare the running times of classification algorithms based on the weighted $pq$-gram distance and the tree edit distance.
In this experiment, we run the standard $k$-nearest neighbor algorithm with two distinct distance functions, the weighted $pq$-gram distance and the tree edit distance.
We measured the time for executing whole process: (i) encoding into count vectors, (ii) computing the distances between the test data and the training data to identify neighbors, (iii) and making an inference by the majority vote. The first encoding step is executed only for the weighted $pq$-gram distance.
We first note that the theoretical running time $\mathcal{O}(n \log n)$ of computing the $pq$-gram distance between trees is much faster than 
the fastest known tree edit distance algorithm, that of \cite{Demaine:2009:ODA:1644015.1644017} which runs in $\mathcal{O}(n^3)$ time.
Thus, the running time of the $k$-nearest neighbor inference with the weighted $pq$-gram distance is $\mathcal{O}(m_1 m_2 (n \log n + k))$,
and that with the tree edit distance is $\mathcal{O}(m_1 m_2 (n^3 + k))$, where $m_1$ is the number of training data points and $m_2$ is the number of test data points.

Table \ref{tab:time} shows the mean running time of 3-nearest neighbor inference for 5-fold cross-validation.
In all datasets, the weighted $pq$-gram distance yields much shorter inference times than the tree edit distance as a distance function.
Especially in the Sentiment dataset, whose mean tree size is the largest among all datasets, 
the $k$-nearest neighbor classifier using our proposed method yields over 5000 times shorter inference time than that using the tree edit distance.


We note that edit distance-based methods such as GESL do not directly learn the edit cost of tree edit distance. 
They first compute an optimal sequence of edit operations for training data which is then fixed (held unchanged). 
Then, they learn an appropriate edit cost along with this sequence of edit operations, 
which makes their learning process much easier and faster than directly learning the edit cost.
However, when making an inference for test data using the learned distance, the costly computation for the edit distance is still a major drawback.

\begin{table}
    \centering
    \begin{tabular}{c|c c}
    Dataset& proposed & edit distance \\ \hline\hline
    Strings & \textbf{1.29 $\pm$ 0.098} sec & 23.7 $\pm$ 0.09 sec\\\hline
    Glycan\_SP & \textbf{0.665 $\pm$ 0.059} sec & 28.4 $\pm$ 2.21 sec\\\hline
    Glycan\_EL & \textbf{0.249 $\pm$ 0.121} sec & 182.9 $\pm$ 8.86 sec\\\hline
    Words& \textbf{41.3 $\pm$ 0.31} sec & 283.9 $\pm$ 5.18 sec\\\hline
    Sentiment & \textbf{1.79 $\pm$ 0.105} sec & 9013 $\pm$ 770 sec\\\hline
    \end{tabular}
    \caption{Mean inference time of the 3-nearest neighbor classifier in each dataset. The first column specifies the dataset. The second column shows the inference time of the 3-nearest neighbor classifier using the weighted $pq$-gram distance. The third column shows that the inference time of the 3-nearest neighbor classifier using the tree edit distance.}
    \label{tab:time}
    \vspace{-2.5em}
\end{table}

\section{CONCLUSION}
This contribution has proposed a novel metric learning approach for tree-structured data that has the following features.
The differentiable parameterized distance based on $pq$-grams (proposed herein), called the {\em weighted $pq$-gram distance}, achieves fast metric learning for tree-structured data.
The time complexity of the weighted $pq$-gram distance is $\mathcal{O}(n \log n)$, while that of the tree edit distance is more than $\mathcal{O}(n^3)$, where $n$ is the number of nodes of the input trees.
Moreover, computation of the proposed distance involves only basic vector operations with the softplus function.
It enables the distance function to be learned by the gradient descent techniques while retaining the triangle inequality during the learning process.
Second, a way of learning the weighted $pq$-gram distance through LMNN, which is one of the most widely-used metric learning schemes, was also proposed.
Moreover, the metric learning problem was formulated as an optimization problem based on the hinge loss-based formulation.
Third, the results of our proposal are interpretable. 
Our weight parameter indicates which substructures in trees are important for classifying input trees.

We have empirically shown that for various classification problems our proposed method reduces error rates compared to the conventional $pq$-gram distance using the $k$-nearest neighbor classification.
Moreover, our approach achieved competitive results with the state-of-the-art edit distance-based methods such as GESL and BEDL.
We have also shown that our approach solves classification problems much faster than the edit distance-based methods.
In our experiments, the $k$-nearest neighbor classifier using our proposed method solved the various classification problems at most 5000 times faster than that using the tree edit distance.

\section*{ACKNOWLEDGEMENTS}
This work was partly supported by JSPS KAKENHI Grant Number 17K19973.


\bibliographystyle{abbrv}
\bibliography{920_paper}

\end{document}